\newcommand{\sig}[1]{{\small\textsf{{#1}}}}
\newcommand{\Comment}[1]{}
\newtheorem{proposal}{Safety Metric Proposal}
\newtheorem{lem}{Lemma}
\newcommand\copyrighttext{%
  \footnotesize \textcopyright 2021 IEEE. Personal use of this material is permitted.
  Permission from IEEE must be obtained for all other uses, in any current or future
  media, including reprinting/republishing this material for advertising or promotional
  purposes, creating new collective works, for resale or redistribution to servers or
  lists, or reuse of any copyrighted component of this work in other works.
  DOI: 10.1109/AITEST52744.2021.00021}
\newcommand\copyrightnotice{%
\begin{tikzpicture}[remember picture,overlay]
\node[anchor=south,yshift=10pt] at (current page.south) {\fbox{\parbox{\dimexpr\textwidth-\fboxsep-\fboxrule\relax}{\copyrighttext}}};
\end{tikzpicture}%
}
\begin{document}
%
% paper title
% Titles are generally capitalized except for words such as a, an, and, as,
% at, but, by, for, in, nor, of, on, or, the, to and up, which are usually
% not capitalized unless they are the first or last word of the title.
% Linebreaks \\ can be used within to get better formatting as desired.
% Do not put math or special symbols in the title.
\title{Safety Metrics for Semantic Segmentation in Autonomous Driving}

% author names and affiliations
% use a multiple column layout for up to three different
% affiliations
\author{\IEEEauthorblockN{Chih-Hong Cheng}
\IEEEauthorblockA{DENSO AUTOMOTIVE Deutschland GmbH\\
Eching, Germany\\
Email: c.cheng@eu.denso.com}
\and
\IEEEauthorblockN{Alois Knoll}
\IEEEauthorblockA{TU Munich\\
Munich, Germany\\
Email: knoll@in.tum.de}
\and
\IEEEauthorblockN{Hsuan-Cheng Liao}
\IEEEauthorblockA{DENSO AUTOMOTIVE Deutschland GmbH\\
Eching, Germany\\
Email: h.liao@eu.denso.com}}

% make the title area
\maketitle
\copyrightnotice

% As a general rule, do not put math, special symbols or citations
% in the abstract
\begin{abstract}
Within the context of autonomous driving, safety-related metrics for deep neural networks have been widely studied for image classification and object detection. In this paper, we further consider safety-aware correctness and robustness  metrics specialized for \emph{semantic segmentation}. The novelty of our proposal is to move beyond pixel-level metrics: Given two images with each having~$n$ pixels being class-flipped, the designed metrics should, depending on the \emph{clustering of pixels being class-flipped} or \emph{the location of occurrence}, reflect a different level of safety criticality. The result evaluated on an autonomous driving dataset demonstrates the validity and practicality of our proposed methodology. 
\end{abstract}

% no keywords

% For peer review papers, you can put extra information on the cover
% page as needed:
% \ifCLASSOPTIONpeerreview
% \begin{center} \bfseries EDICS Category: 3-BBND \end{center}
% \fi
%
% For peerreview papers, this IEEEtran command inserts a page break and
% creates the second title. It will be ignored for other modes.
\IEEEpeerreviewmaketitle

% Will do the split only when the paper has reached some maturity. 

\section{Introduction}\label{sec.intro}

While deep neural networks (DNNs) have been widely employed in perception systems of autonomous vehicles, the safety problem caused by unreliable DNNs is one of the primary barriers in massively deploying the technology. In this paper, we consider \emph{safety-aware  metrics} for \emph{semantic segmentation} applications. Targeting semantic segmentation differentiates our work from prior results, as existing work either focuses on the overall machine learning paradigm or is restricted to \emph{image classification} or \emph{object detection} domains.

Given an image, the semantic segmentation DNN assigns each pixel one class within the set of the predefined classes. When designing safety-related metrics for semantic segmentation, an intuitive approach is to reuse classification-oriented safety metrics. However, it leads to counter-intuitive and hard-to-interpret associations with safety. This is due to the observation that in semantic segmentation, classification is operated on the pixel-level, and simply \emph{averaging} pixel-level metrics lacks information which allows for understanding the impact of \emph{clustering} and \emph{occurring location}. The key contribution of this paper is thus the proposal of new safety-aware metrics to be used for semantic segmentation. The metrics can be used as an evaluation function in testing, providing evidence that the prediction does not lead to safety-related concerns. During our investigation, we discovered intriguing properties regarding the non-monotonic change of error density subject to the enlargement of filter sizes. To avoid computing metrics for all possible filter sizes, we developed an efficient algorithm that iteratively refines the upper bound for the filter size to be applied. 

A preliminary evaluation, by implementing the mentioned metrics into TensorFlow 2.0 and evaluating over the Berkeley Cityscapes dataset \cite{Cordts2016Cityscapes}, a public semantic segmentation benchmark, clearly indicates the appropriateness of our proposals. Altogether our work provides an initial step towards safety argumentation for semantic segmentation neural networks: for semantic segmentation, with carefully crafted safety-aware metrics, it is possible to (without sacrificing safety) deploy a DNN that does not generate  perfectly accurate pixel-level prediction in training and testing.

The remainder of the paper is structured as follows. Section~\ref{sec.related.work} presents an overview of related work and outlines the difference of our approach. Section~\ref{sec.safety.aware.metrics} gives a brief summary of semantic segmentation and  presents the associated safety-aware metrics. Finally, we discuss our preliminary evaluation in Section~\ref{sec.evaluation} and conclude with future work in Section~\ref{sec.concluding.remarks}.

\section{Related Work}\label{sec.related.work}

\subsection{Importance-Aware Semantic Segmentation}\label{subsec.importance.aware}

Semantic segmentation using DNN from established results such as FCN~\cite{long2015fully}, U-Net~\cite{ronneberger2015u}, SegNet~\cite{badrinarayanan2017segnet} and DeepLab~\cite{chen2017deeplab}
can be found in vision pipelines for free space detection, road or land boundary identification. These applications enables algorithmic diversity within the single sensor modality.  
We refer readers to the review~\cite{siam2018comparative} for a comparative study on evaluating semantic segmentation in autonomous driving. 

During training, semantic segmentation models normally apply the pixel-wise cross-entropy loss, viewing each output of a pixel as a classifier. To tackle class imbalance in semantic segmentation, e.g. there are generally more pixels of \sig{road} than other classes such as \sig{pedestrian}, researchers adapt additional loss function including weighted cross-entropy~\cite{paszke2016enet} and focal loss~\cite{lin2017focal}. Nonetheless, the introduction of these loss functions aims at improving the accuracy in the training process and they are not safety-aware. The recent work on \emph{importance-aware semantic segmentation}~\cite{liu2020importance} addresses the issue that certain types of misclassification are more important than others. For instance, an incorrect classification from a \sig{car} to a \sig{bus} should be less punished than one from a \sig{car} to the \sig{sky}. Consequently, the authors define a matrix, based on the Wasserstein distance, to associate each type of class flip with a different weight factor. This idea is a natural generalization of importance-aware object detection with examples such as weighted confusion, where the error for a \sig{pedestrian} object being identified as a \sig{background/no-object} should be smaller than the opposite, considering the safety-criticality. Our paper naturally covers the limitation of this line of work: apart from pixel-level information in ordinary semantic segmentation, our safety-aware metrics further characterize the clustering of misclassified pixels as well as their occurring locations in one image.

\subsection{Robustness and Safety Remedies}
\label{subsection:Robustness}

In object classification and detection, there are varied definitions of robustness (see~\cite{huang2020survey} for an overview), either emphasizing pixel-level perturbations using L-norms or ``natural transformations" such as weather changes. The definition of the former normally considers class-flips and would thus require proper adaptations (as suggested in this paper) for semantic segmentation.

Perturbation robustness, also known as adversarial robustness, generally refers to a model's ability to handle small changes within the input images that are able to deceive the underlying DNN and cause errorneous model behaviors. In comparison to the research efforts in perturbation robustness of image classifiers \cite{szegedy2014intriguing,goodfellow2015explaining,madry2019deep}, there have not been many focuses in semantic segmentation scenarios. Arnab \textit{et al.}~\cite{arnab2018robustness}, being among the first few, evaluate different models' robustness against FGSM-based attacks, including single-step/iterative and untargeted/targeted operations. They suggest that techniques such as residual connections, multiscale processing and Conditional Random Fields (CRFs) could lead to better model performance in terms of accuracy and robustness. More recently, Xu \textit{et al.} \cite{xu2020dynamic} devise a dynamic divide-and-conquer algorithm to robustly train their DNN, while Klinger \textit{et al.} \cite{klingner2020improved} employ an auxiliary depth-estimating branch for redundancy. 

Although the above studies have shown various extents of success, there is a common absence of an effective metric that reflects safety-criticality in different scenarios. For example, a perturbation-caused performance degradation in the \sig{sky} should be considered less relevant than one on a \sig{pedestrian}.
The lack of existing safety-aware metrics is likely due to the difficulties in defining the norm of safety and deriving it from DNN models. Only until recently that Xu \textit{et al.} \cite{xu2020quantification} attempt to quantify safety risks of DNNs in image classification tasks with respect to perturbation robustness. 
Specifically, the authors regard the discrepancy of two confidence values, e.g. the two largest softmax probabilities, as an robustness indicator.
Then, they propose a Lipschitzian metric to approximate this quantifier, and implement a software tool to evaluate DNN classifiers. Despite their paradigm, our work targets directly at semantic segmentation applications, in which the pixel-level classifications usually require a distinct treatment.

Finally, there are generic argumentation frameworks (e.g.,~\cite{UL4600,zhao2020safety,jia2021framework}) addressing potential insufficiencies in engineering machine learning components, including data collection, training, testing and runtime monitoring. The ISO Technical Report~4804~\cite{saFAD} has a dedicated appendix focusing on the safety engineering of DNNs, but again the detailed use case is for 3D object detection rather than semantic segmentation.

\section{Safety-aware Performance Characterization for Semantic Segmentation} \label{sec.safety.aware.metrics}

\subsection{Foundation}

In this paper, we consider semantic segmentation over 2D images. Let $\mathcal{L}$ be the list of \emph{labels} with each element being disjoint from others, and $|\mathcal{L}|$ the size of the list. For example, for the Berkeley Cityscapes dataset~\cite{Cordts2016Cityscapes},  $\mathcal{L}$ has a size of~$30+$ elements. Given an input image $\sig{in}$ with number of pixels being $L\times W$, a \emph{semantic segmentation network} is a function $\sig{snn}$   that assigns, for each pixel indexed $(i,j) \in ([1, L] \cap \mathbb{Z}) \times ([1, W] \cap \mathbb{Z})$, a vector $\sig{snn}(\sig{in}, (i,j)) = (v_1, \ldots, v_{|\mathcal{L}|})$. Let $\sig{snn}_{k}(\sig{in}, (i,j))$ be the $k$-th value of the vector, then the semantic segmentation network predicts the pixel $(i,j)$ to be class~$k$ if for all other $k' \in [1, |\mathcal{L}|] \cap \mathbb{Z}$, $\sig{snn}_{k}(\sig{in}, (i,j)) \geq \sig{snn}_{k'}(\sig{in}, (i,j))$. We use the term $\sig{argmax}(\sig{snn}(\sig{in}, (i,j)))$ to denote the generated prediction.\footnote{Here the formulation is simplified in that it is possible to have two entries in $\textsf{snn}(\sig{in}, (i,j))$ with the same maximal value. We assume that the \textsf{argmax} operator always returns the smallest index to avoid output ambiguity. } 

Finally, the \emph{ground-truth} of an image $\sig{in}$ can also be viewed as function $\sig{gt}$   that assigns, for each pixel indexed $(i,j) \in ([1, L] \cap \mathbb{Z}) \times ([1, W] \cap \mathbb{Z})$, a value $\sig{gt}(\sig{in}, (i,j)) \in [1, |\mathcal{L}|] \cap \mathbb{Z}$. 

\subsection{Developing pixel-level correctness and robustness metrics}\label{subsec.pcm.prm}

With the above formulation, one can view each pixel as independent and consider the correctness and robustness metric on the pixel level. 
Given an image~$\sig{in}$, a semantic segmentation network $\sig{snn}$ generates correct prediction on the pixel indexed $(i,j)$ when the following condition holds.
\vspace{-2mm}
\begin{equation*}
\sig{argmax}(\sig{snn}(\sig{in}, (i,j))) = \sig{gt}(\sig{in}, (i,j)) 
\end{equation*}

Given an image~$\sig{in}$, a semantic segmentation network $\sig{snn}$ generates \emph{completely correct prediction} when all pixels are predicted correctly, that is, 
\vspace{-2mm}
\begin{multline*}
\forall (i, j) \in ([1, L] \cap \mathbb{Z}) \times ([1, W] \cap \mathbb{Z}):\\ \sig{argmax}(\sig{snn}(\sig{in}, (i,j))) = \sig{gt}(\sig{in}, (i,j))
\end{multline*}

We can define~$\sig{pcm}(\sig{snn}, \sig{in})$, the \emph{prediction correctness metric} of a semantic segmentation network $\sig{snn}$ under an input image $\sig{in}$, to be the ratio where  pixel are predicted correctly, i.e., 
\begin{multline*}\sig{pcm}(\sig{snn}, \sig{in}) := \\ \frac{|\{(i,j)\;|\;\sig{argmax}(\sig{snn}(\sig{in}, (i,j))) = \sig{gt}(\sig{in}, (i,j))\}|}{L\times W}
\end{multline*}

We can also describe the \emph{pixel-level robustness} for the semantic segmentation network over the image $\sig{in}$, i.e., $\delta = \sig{perturb}(\sig{snn}, \sig{in}, (i,j))$, as the minimum amount of perturbation to be applied on $\sig{in}$ such that the network changes its prediction on the specific pixel. That is, $\delta$ can be characterized using the following equation:
\begin{multline*}
\sig{argmax}(\sig{snn}(\sig{in}, (i,j))) \neq \sig{argmax}(\sig{snn}(\sig{in}+\delta, (i,j)))\\ 
\text{and}\\
\forall \delta': \sig{argmax}(\sig{snn}(\sig{in}, (i,j))) \neq \sig{argmax}(\sig{snn}(\sig{in}+\delta', (i,j)))\\ \rightarrow |\delta|_{L_{\infty}} \leq |\delta'|_{L_{\infty}}
\end{multline*}

In the above formulation, we define the the concept of ``minimum" using $L_{\infty}$ norm; readers can easily change the formulation to alternative $L$-norms formulations such as~$L_{1}$ norm or~$L_{2}$ norm.
Given a fixed perturbation budget $\Delta$ subject to the $L_{\infty}$ norm, we can also compute~$\sig{prm}_{\Delta}(\sig{snn}, \sig{in})$, the \emph{perturbation robustness metric}, to reflect the maximum number of pixels changing from a correct prediction to an incorrect prediction. Let $S := \{(i,j)\;|\;\sig{argmax}(\sig{snn}(\sig{in}, (i,j))) = \sig{gt}(\sig{in}, (i,j))\}$ be the indices of pixels having correct prediction. Then the perturbation robustness metric is computed using the below formula. 
\vspace{-2mm}
\begin{multline*}
\sig{prm}_{\Delta}(\sig{snn}, \sig{in}) := \max_{|\delta|_{L_\infty} \leq \Delta}  \frac{|S \cap  P|}{|S|} \;\;\;\;\;\;\text{with}\\
P := \{(i,j) | \;\sig{argmax}(\sig{snn}(\sig{in}+\delta, (i,j))) \neq \sig{gt}(\sig{in}, (i,j))\}\}
\end{multline*}

\subsection{Principles for safety-aware  metrics}\label{subsec.designing.safety.aware.metrics}

The metrics presented in the previous section are essentially computed by \emph{summing} information collected from  predictions of individual pixels. While for object classification and detection, misclassifying an object can directly be linked to safety issues, the link is weaker in semantic segmentation. That is, inaccurate semantic segmentation does not necessarily cause safety issues. This motivates us to raise the following considerations in building safety-aware metrics by considering the following principles.

\subsubsection{The impact of single pixel class-flipping} We start with the following simplest problem: \emph{if there exists a single pixel that is not classified correctly, what is its impact on safety?} We use Figure~\ref{fig:pinhole.camera} to estimate the impact. Let $f$ being the focal length of the camera, and assume that we use a camera with~$f$ being $28mm$, and let $K=1080$ be the number of pixels in the height. Let~$D=50$ meters  be the specified longest distance where problematic decisions may have a safety impact, e.g., the distance to identify all pedestrians. Let~$d$ be $10$ meters. In other words, an object whose height is~$10$ meters  can completely occupy the height of the image sensing area. Therefore, a single pixel on the image plane, when translating back to the~$50$ meter setup, may occupy around $\frac{10}{1080} = 0.009259$ meter, i.e., around~$0.93$ cm. Analogously, a single pixel can only contribute around~$0.46$ cm at~$25$ meter range. When the safety specification explicitly ignores tiny objects below the dimension mentioned above, we can derive the following statement in layman's words.

\begin{proposal}\label{proposal.size}
	Perturbing a single pixel (or a small cluster of pixels) in the semantic segmentation may not impact safety, provided that conservatively restoring the perturbed pixel to physical objects is relatively small subject to a given safety distance.
\end{proposal}

\subsubsection{The impact of class flipping to adjacent pixels} The second problem is regarding multiple pixels flips to the class adjacent to them. This happens on object borders where the separation between an object (e.g., pedestrian) and the background is hard to justify - thereby inducing labeling uncertainties. Provided that a proper safety buffer~$\eta$ is set in the subsequent computation modules (e.g., modules for generating collision-free trajectory) and the safety buffer~$\eta$ sufficiently covers the potential error, we have the following proposal.

\vspace{1mm}
\begin{proposal}\label{proposal.border}
	For pixels around the borders of an object, flipping their object classes to the outside-border class may not impact safety, provided that sufficiently large safety buffers have been pre-allocated in the subsequent planning and control module.
\end{proposal}

\begin{figure}[t]
	\centering
	\includegraphics[width=0.9\columnwidth]{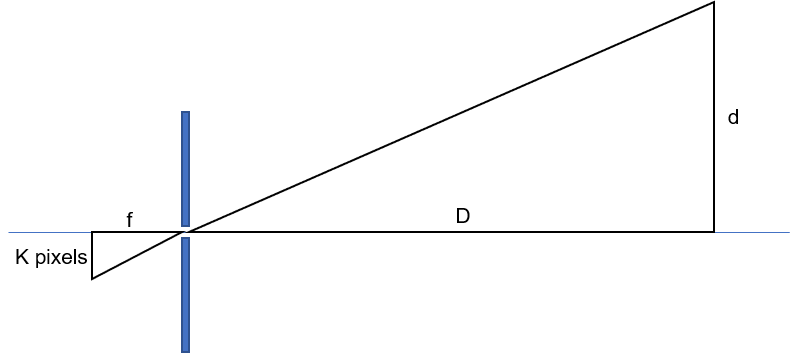}
	
	\caption{Understanding the effect of single pixel decision problem (picture does not draw to according to the actual scale)}
	\label{fig:pinhole.camera}
\end{figure}

\subsubsection{The impact of clustering and relative location}
Finally, we consider the impact of clustering and the relative location of the cluster. We use Figure~\ref{fig:metric.principle} to assist understanding. Given that there exist~$5$ pixels with incorrect classification, our designed metric should reflect that case~(c) is the most safety-critical, followed by (b) and (a). Case~(a) being least critical is explained in previous paragraphs; a larger pixel-cluster implies a  more significant  occupation in the physical space. However, assuming that the image is taken from the front-facing camera, then abnormality  occurred in (c) requires more attention due to its falling inside the planned immediate vehicle trajectory. 

\begin{proposal}\label{proposal.location}
	In addition to clustering effects, within a predefined region in the image that requires additional safety attention, the metric shall weigh more when perturbation or misclassification occurs in that region. 
\end{proposal}

\begin{figure}[t]
	\centering
	\includegraphics[width=\columnwidth]{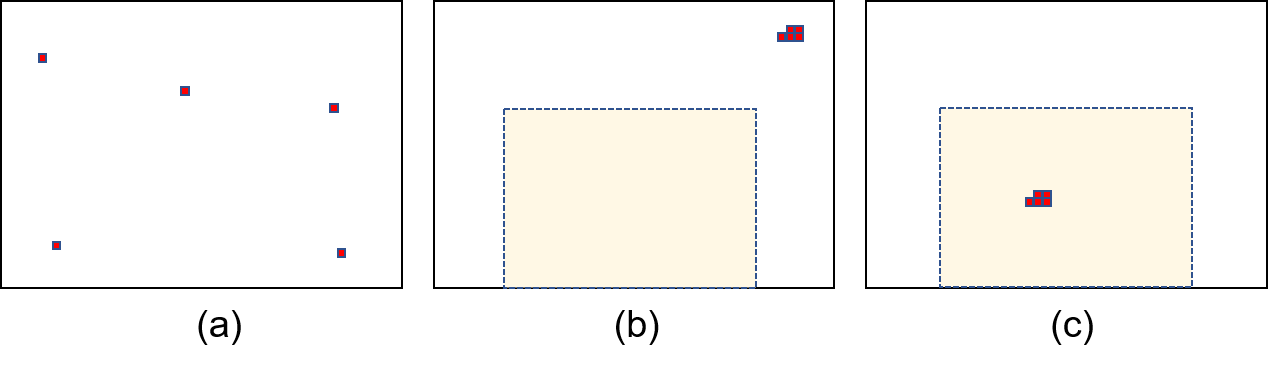}
	\caption{Three figures with each having $5$ pixels of problematic DNN decisions}
	\label{fig:metric.principle}
\end{figure}

\noindent\textbf{(Remark)} Note that in the metric design, one can weigh certain types of class flips to be more severe than others, following the paradigm of weighted confusion. In this paper, we present a generic framework by weighting them equally. Readers are encouraged to adopt the framework by introducing more principles to tailor to their concrete needs.

\subsection{Safety-aware qualitative metric characterization}\label{subsec.safety.aware.qualitative.metric}

\begin{figure}[t]
	\centering
	\includegraphics[width=\columnwidth]{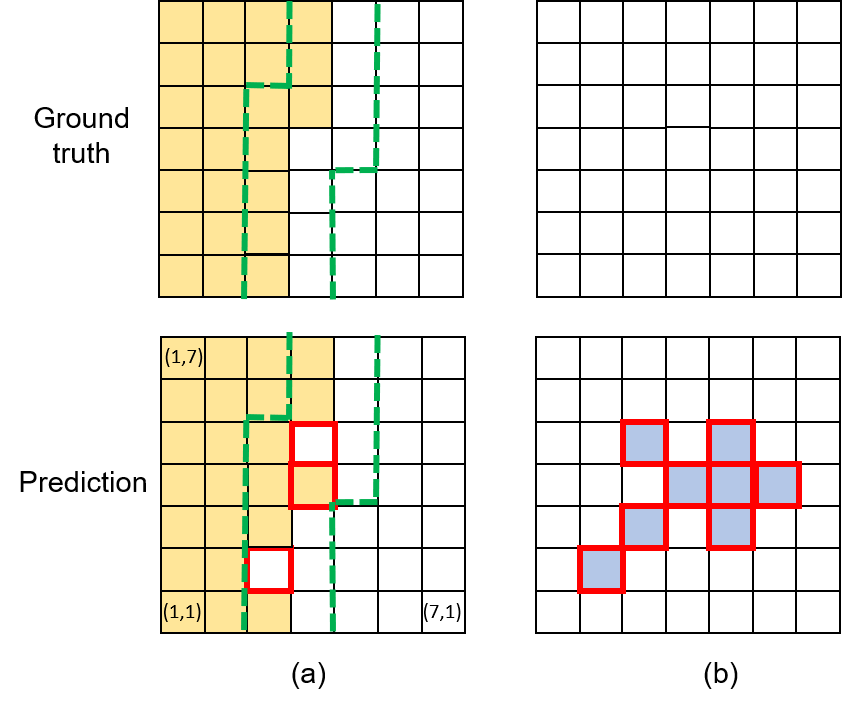}
	\caption{Filters are used to suppress those imperfect edge predictions that do not lead to safety issues; edges in semantic segmentation is surrounded by two green dashed lines~(a).  Applying a $5\times 5$ filter reports ``\sig{unsafe}" for the clustering of prediction errors~(b). The color in each pixel indicates  the corresponding semantic segmentation class. Pixels being wrongly classified are highlighted in red color with bold borders.}
	\label{fig:metric.filter}
	\vspace{-5mm}
\end{figure}

We consider the first type of safety metric being \emph{qualitative}, where $\sig{true} / \sig{safe}$ guarantees the absence of safety issues subject to the principle, while $\sig{false} / \sig{unsafe}$ implies  the result of ground-truth semantic segmentation may imply safety concerns. We use $\sig{c}(i,j)$ to characterize whether the prediction of the pixel indexed at~$(i,j)$ is correct. In other words, 
\vspace{-1mm}
\begin{multline}
\label{eq.correctness.prediction}
\sig{c}(i,j) := 
\sig{argmax}(\sig{snn}(\sig{in}, (i,j))) = \sig{gt}(\sig{in}, (i,j))\\ ? \;\; \sig{true} \;\;:\;\; \sig{false} 
\end{multline}

Subsequently, we follow a three-step process to compute the safety metric. Essentially, the computation  suppresses specific pixel prediction errors (i.e., by swapping $\sig{c}(i,j)$ from $\sig{true}$ to $\sig{false}$) as they do not cause safety concerns; readers may add additional or remove particular filters for their specific use case. 

\vspace{1mm}

\textbf{Step 1: Decide the error treatment on non-critical area.} Regarding errors outside the critical region (e.g., errors appeared on the image border), if the safety principle specifies that these errors never have a safety impact, the filter may completely suppress the prediction error (Proposal~\ref{proposal.location}). We set the predicate $\sig{outside-critical-region}(i,j)$ to $\sig{true}$ iff $(i,j)$ is outside the critical region. 

\vspace{1mm}

\textbf{Step 2: Conservatively neglect errors appearing on edges.} Our second step is to consider the impact of wrong predictions on the edges of the semantic object (Proposal~\ref{proposal.border}). An error-pixel can be \emph{tolerated by edge-imprecision} if (1) in the corresponding semantic segmentation labeling figure, the pixel is on the edges, and (2) the generated class prediction is to the labels on the opposite side of the edge. Altogether we have the following definition. 
\begin{multline}\label{eq.edge.error.no.safety.impact}
\sig{edge-error-neglected}(i,j) := (\sig{c}(i,j) = \sig{false} \; \wedge \\
\sig{belong-edge-in-semantic}(i,j)\; \wedge \;\\  \sig{argmax}(\sig{snn}(\sig{in}, (i,j))) \in  \sig{edge-correct-labels}(i,j)) \\
\; ? \; \sig{true} \; : \; \sig{false} 
\end{multline}

In Eq.~\ref{eq.edge.error.no.safety.impact}, $\sig{belong-edge-in-semantic}(i,j)$ is a predicate checking in the ground-truth semantic segmentation, whether pixel $(i,j)$ is located at an edge. The predicate $\sig{edge-correct-labels}(i,j)$ returns classes of the center pixel $(i,j)$ as well as its eight neighboring pixels within a $3 \times 3$ patch.
An simplified example can be found in Figure~\ref{fig:metric.filter}(a), where based on the ground-truth, pixel indexed~$(3,2)$ is sitting on the edge, thus $\sig{belong-edge-in-semantic}(3,2) = \sig{true}$. By taking the ground truth labels on two sides of the edge, one derives $\sig{edge-correct-labels}(3,2) = \{\sig{orange}, \sig{white}\}$. Therefore,  $\sig{edge-error-neglected}(3,2) = \sig{true}$, meaning that in our safety analysis, the prediction error at location $(3,2)$ can be neglected.

\begin{figure}[t]
	\centering
	\includegraphics[width=\linewidth]{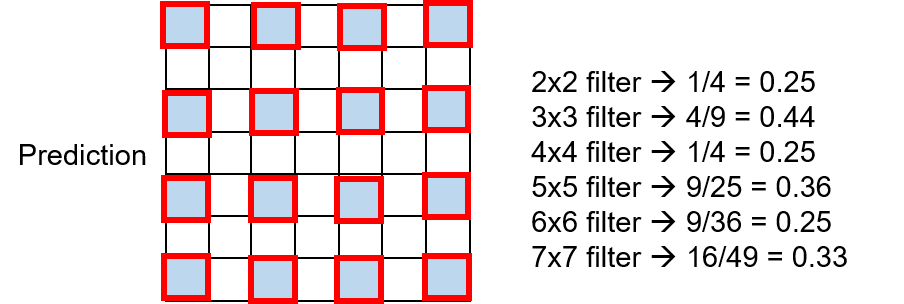}
	\caption{An example where $\alpha = 0.4$ can be ensured in the $2\times 2$ filter but not on the $3\times 3$ filter.}
	\label{fig:error.rate.increase}
\end{figure}

\vspace{1mm}
\textbf{Step 3: Detect dense clustering of incorrect pixel predictions.} Finally, we take the smallest $k_{safe}\times k_{safe}$ filter, where an image of pixel size $k_{safe}\times k_{safe}$, whenever being conservatively projected to the distance~$D$ as demonstrated in Figure~\ref{fig:pinhole.camera}, can lead to safety concerns (Proposal~\ref{proposal.size}). We use the filter to scan through the image, and during scanning,  whenever a large portion  of the $k_{safe}\times k_{safe}$ region demonstrates incorrect predictions with the \emph{error density} larger than the \emph{density threshold}~$\alpha$, we consider this region negatively impact safety, and the resulting metric should report \sig{unsafe}. For example, consider a $5\times5$ filter to be applied on Figure~\ref{fig:metric.filter}(b), provided that $\alpha = 0.2$, then when the filter is placed at the center of the image, the error density has reached $\frac{8}{25} = 0.32$, so the computed safety metric should return \sig{unsafe}.

It is tempting to conclude that it suffices to scan the image using a $k_{safe}\times k_{safe}$ filter. That is, if the filter reports \sig{safe} by proving that every  $k_{safe}\times k_{safe}$ region has an error density below $\alpha$, for any filter of larger size ($k' \times k'$ where $k' > k_{safe}$) the result remains \sig{safe}. Unfortunately, \emph{the above statement does not hold in general}. A counter-example is shown in Figure~\ref{fig:error.rate.increase}, where for the case of $2\times 2$ filter, the maximum error rate equals $\frac{1}{4}$;  in the $3 \times 3$ filter, the maximum error rate has increased to $\frac{4}{9}$. Therefore, if the error density threshold is set to be~$0.4$, the inductive property does not hold.

Based on the counter-example, it is not sufficient to only use the $k_{safe}\times k_{safe}$ filter. Therefore one shall gradually increase the filter size but $k$ can be as large as the width of the image. This makes the computation extremely ineffective as one needs to try all filters with sizes from small to large. In the following, we provide an iterative method to substantially speed up the computation, while the soundness (in terms of reporting ``\sig{safe}") is maintained. 

\begin{enumerate}

	\item Assume that the sum of all error predictions, i.e., $| \{(i,j) \; | \;\sig{c}(i,j) = \sig{false} \}| $ equals~$C_1$. We first find the minimum positive integer~$K_1$ such that the following condition holds.
\begin{equation}\label{eq:minimum.filter.size}
\frac{C_1}{(K_1)^2} < \alpha
\end{equation}

The computed $K_1$ value acts as a conservative estimate - whenever a $k \times k$ filter is used, and if $k \geq K_1$, it is impossible for the error density to be larger or equal to~$\alpha$.  For example, let  $ C_1=10000$. The smallest $K_1$ for $\frac{10000}{(K_1)^2} < 0.5$ equals $142$, meaning that any $k \times k$ grid where $k \geq 142$, it is impossible to be larger than the error density threshold. Therefore, one at most should try filter until size $141 \times 141$.

	\item Apply $(K_1-1) \times (K_1-1)$ filter, and compute the maximum possible errors that can appear in the image, and let this number be $C_2$. Continuing the example, assume that at most $1000$ prediction errors can occur when applying the $141 \times 141$ filter. Therefore, we set $C_2$ to be $1000$. 

\item Now we can again ask a question similar to (1): what is the minimum~$K_2$ such that the following situation occurs.
\vspace{-1mm}
\begin{equation}\label{eq:find.smallest.retangle}
\frac{C_2}{(K_2)^2} < \alpha
\end{equation}

Continue again with the example, the smallest~$K_2$ for ensuring $\frac{1000}{(K_2)^2} < 0.5$ equals~$45$, meaning that applying any $k \times k$ filter  where $k \geq 45$, it is impossible to derive the error density larger than the threshold~$0.5$. In other words, even when we take the maximum number of $1000$ error predictions obtained from the $142\times 142$ filter, align them tightly into a rectangle, under a $45\times 45$ filter, the error density can still only be $\frac{1000}{(45)^2} = 0.494 < 0.5$.

\item Repeat steps (2) and (3) until one of the following situation occurs.

\begin{enumerate}
	\item $k_{safe}$ is reached and the error density remains below~$\alpha$ (\sig{safe}).
	\item Stop at a particular $k \times k$ filter, as one has detected a  case where the for a particular $k \times k$ region, its error density is above~$\alpha$  (\sig{unsafe}).

\end{enumerate}

\end{enumerate}

\noindent Algorithm~\ref{algo:safety.aware.qualitative.metrics}  summarizes the overall qualitative metric computation process, which is simply realizing the steps mentioned above.\footnote{Note that in Algorithm~\ref{algo:safety.aware.qualitative.metrics}, we only construct a squared filter ($k \times k$). The algorithm can be easily extended to other filter shapes such as $k \times 4k$ to incorporate fine-grained considerations such as filtering thin objects such as pedestrians.} The termination of the process relies on an assumption that the filter size~$k$ reduces (with quantity at least~$1$) in every iteration  such that either~$\alpha$ is violated or~$k_{safe}$ will be reached eventually. We prove in the following Lemma~\ref{lemma.termination} that such assumption always holds.

\begin{lem}~\label{lemma.termination} 
The filter size $k$ in Algorithm~\ref{algo:safety.aware.qualitative.metrics} decreases iteratively until $\alpha$ is violated or $k_{safe}$ is reached.
\end{lem}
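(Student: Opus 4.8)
The plan is to show that the sequence of filter sizes examined by Algorithm~\ref{algo:safety.aware.qualitative.metrics} strictly decreases at every non-halting iteration, so that — being a sequence of positive integers bounded below by $k_{safe}$ — it must reach one of the two stopping conditions after finitely many steps. First I would fix notation for the iteration: write $C_1$ for the total number of error pixels and, for $t \geq 1$, let $C_{t+1}$ denote the maximum number of error pixels contained in any $(K_t-1)\times(K_t-1)$ window, where $K_t$ is the minimal positive integer satisfying $\frac{C_t}{K_t^2} < \alpha$ as in Eq.~\ref{eq:minimum.filter.size} and Eq.~\ref{eq:find.smallest.retangle}. The filter actually applied at iteration $t$ has size $K_t-1$, so the lemma reduces to proving $K_{t+1} < K_t$ whenever the algorithm does not halt.

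The core of the argument is a dichotomy derived from the minimality of $K_t$. Because $K_t$ is the smallest integer meeting the strict inequality, its predecessor fails it, i.e. $\frac{C_t}{(K_t-1)^2} \geq \alpha$. Now apply the $(K_t-1)\times(K_t-1)$ filter and read off $C_{t+1}$. If $\frac{C_{t+1}}{(K_t-1)^2} \geq \alpha$, then this very filter has exposed a window whose error density reaches $\alpha$, so the algorithm halts and reports \sig{unsafe} (the ``$\alpha$ is violated'' case). Otherwise $\frac{C_{t+1}}{(K_t-1)^2} < \alpha$, which says that $K_t-1$ already satisfies the defining inequality for $C_{t+1}$; hence the minimal such integer obeys $K_{t+1} \leq K_t-1$, and in particular $K_{t+1} < K_t$. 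Thus in every non-halting iteration the bound, and with it the applied filter size, strictly decreases by at least one.

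With the strict decrease in hand, termination is immediate: the applied filter sizes form a strictly decreasing sequence of integers that is never allowed to drop below $k_{safe}$, so after at most $K_1 - 1 - k_{safe}$ iterations the algorithm either triggers the \sig{unsafe} branch above or arrives at $k_{safe}$ with all densities below $\alpha$ and reports \sig{safe} — exactly the two stopping conditions in the statement. I would close by disposing of the degenerate cases: if $C_t = 0$ there are no errors and the instance is trivially \sig{safe}, and if $K_t = 1$ then $\frac{C_t}{1} < \alpha \leq 1$ forces $C_t = 0$, so $K_t=1$ never arises in a non-trivial iteration.

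The step I expect to be the main obstacle is the dichotomy itself, specifically resisting the tempting but false shortcut that the error count strictly drops each round, $C_{t+1} < C_t$. That fails precisely when all errors already cluster inside a single $(K_t-1)\times(K_t-1)$ window, where $C_{t+1} = C_t$; the point is that this degenerate clustering is exactly the situation in which the applied filter witnesses density $\geq \alpha$ and the algorithm halts as \sig{unsafe}. Making the argument hinge on the minimality inequality for $K_t$ rather than on a naive decrease in the count is therefore the key move, and it is what cleanly separates the ``strict decrease'' and ``halt'' cases.
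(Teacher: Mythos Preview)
Your proposal is correct and follows essentially the same approach as the paper: both argue that in a non-halting iteration the condition $\frac{C_{t+1}}{(K_t-1)^2} < \alpha$ forces the next filter size to be strictly smaller, with the halting case handled separately. The paper obtains the strict decrease via the explicit formula $k_{i+1} = \lfloor\sqrt{C_{i+1}/\alpha}\rfloor$ combined with $C_{i+1} < \alpha k_i^2$, whereas you deduce $K_{t+1} \leq K_t - 1$ directly from the minimality definition of $K_{t+1}$; these are equivalent, and your version additionally spells out the finite-descent termination and the degenerate cases that the paper leaves implicit.
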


\begin{proof}
Assume at iteration~$i$, the algorithm has computed the maximum number of erroneous predictions $C_i$ within a $k \times k$ filter, and condition at line~13 does not hold. Given the error density threshold~$\alpha$, at line~16 and~17, we first update the filter size~$k$. Here we use $k_i$ to represent the computed result for~$k$ in the $i$-th round.

Then, we proceed with the next iteration (i.e., $i+1$) of the while loop, and scan through the error image with the filter $k_i \times k_i$ derived from last iteration. This results in an updated error count~$C_{i+1}$ and  two possible cases, namely \sig{unsafe} and \sig{safe}. The former case (i.e. $\frac{C_{i+1}}{(k_i)^2} \geq \alpha$) is trivial as the algorithm can cease and report the result directly, guaranteeing termination. For the latter (i.e. $\frac{C_{i+1}}{(k_i)^2} < \alpha$), we note the following. 
\begin{equation}\label{eq:count.inequality}
C_{i+1} < \alpha \times (k_i)^2
\end{equation}

Then line 16 and 17 essentially calculate the next filter size~$k_{i+1}$ as:
\begin{equation}
k_{i+1} = \lfloor \; \sqrt[]{\frac{C_{i+1}}{\alpha}} \; \rfloor 
\end{equation}

Considering Eq.~\ref{eq:count.inequality} and the floor operation, we have:  
\begin{equation}
k_{i+1} =  \lfloor \sqrt[]{\frac{C_{i+1}}{\alpha}} \rfloor \leq \sqrt[]{\frac{C_{i+1}}{\alpha}} 
<   \sqrt[]{\frac{\alpha \times (k_i)^2}{\alpha}} 
=  k_i
\end{equation}
\end{proof} 
\vspace{-2mm}

\begin{algorithm}[t]
	\SetAlgoLined
\DontPrintSemicolon
    \SetKwInOut{Input}{input}
    \Input{2D map $\sig{c}$ where $\sig{c}(i,j)$ is defined using Eq.~\ref{eq.correctness.prediction}, smallest filter size to have safety impact $k_{safe}$, density threshold $\alpha$, the length of the image~\sig{img-length}  }
     \For{$(i,j)$}{
    	\lIf{$\sig{outside-critical-region}(i,j)$}{$\sig{c}(i,j) := \sig{true}$}
    }
    
    \For{$(i,j)$}{
    	\lIf{$\sig{edge-error-neglected}(i,j)$}{$\sig{c}(i,j) := \sig{true}$}
	}

	\textbf{let} $k := \sig{img-length}, i := 1$\;
	\While{$k \geq k_{safe}$}{
		
		$C_i := 0$
		
		\For{every $k\times k$ grid $G$ in the 2D map $\sig{c}$}
		{$C_i := \max(C_i,  | \{(i,j) \; | \;(i,j) \in G \newline  \wedge \sig{c}(i,j) = \sig{false} \}|)$}
		
		\eIf{$\frac{C_i}{k^2} \geq \alpha$}{
			\Return{\sig{unsafe}}
		}{
		
			\tcc{Find smallest filter that guarantees not to violate~$\alpha$, subject to the maximal error counts.}
			\textbf{let} $K_i := \min \{x\;|\;x \in \mathbb{N} \wedge  \frac{C_i}{x^2} < \alpha\}$\; 
			$k = K_i - 1$ 
			
		}
	}

	\Return{\sig{safe}}\;

	\caption{Computing safety-aware qualitative metrics for semantic segmentation}\label{algo:safety.aware.qualitative.metrics}
\end{algorithm}

\vspace{3mm}

\noindent\textbf{(Safety-aware robustness qualitative metrics)} Here we omit technical formulations, but the above qualitative analysis  can  naturally be extended for robustness considerations. Errors caused by perturbation increase the overall error density. Nevertheless, so long if the error density remains lower than the density threshold regulated by safety principles, one can safely neglect the error caused by perturbation.

\subsection{Moving towards safety-aware quantitative metrics}

\begin{figure}[t]
	\centering
	\includegraphics[width=\columnwidth]{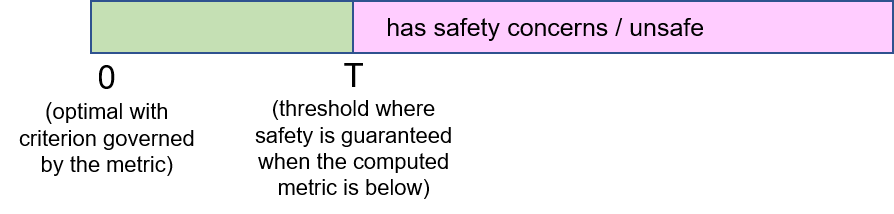}
	\caption{General principle in designing quantitative metrics}
	\label{fig:quantitative.metric}
\end{figure}

In this section, we highlight some simple extensions to transfer our qualitative safety metric to coarse-grained quantitative safety metrics, where the computed numerical value indicates the degree of violation (Figure~\ref{fig:quantitative.metric}). 
\begin{itemize}
	\item \textbf{(Quantify the degree using maximum grid size)} Recall in Algorithm~\ref{algo:safety.aware.qualitative.metrics} that it returns $\sig{unsafe}$ (lines~$13$ and~$14$) when there exists a $k \times k$ grid~$G$ such that the associated errors, dividing by the area size $k^2$ exceeds the density threshold. Therefore, the first method is to simply return the value~$k$ to quantify the degree by associating it with region size.  
	
	\item \textbf{(Quantify the degree using maximum error density)} The second possibility is to return the maximum error density. For this purpose, Algorithm~\ref{algo:safety.aware.qualitative.metrics} needs to be modified such that it does not stop upon encountering $\sig{unsafe}$ situations but remains to bookkeep the error density for every $k \geq k_{safe}$.
\end{itemize}

\section{Evaluation}\label{sec.evaluation}

\begin{figure*}[]
	\centering
	\begin{subfigure}{\textwidth}
		\centering
		\includegraphics[width=0.24\textwidth]{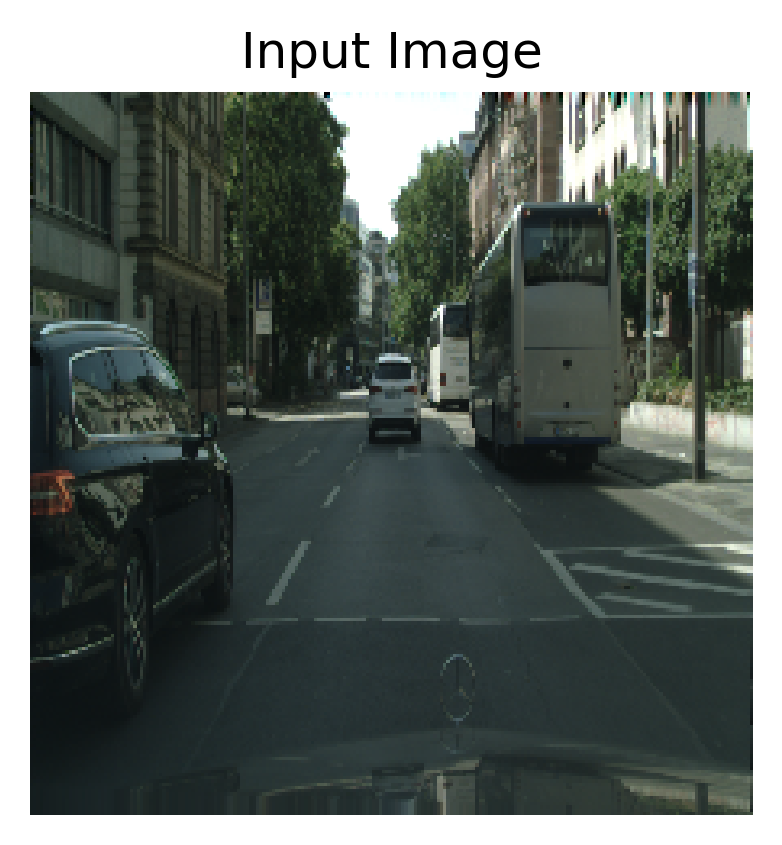}
		\includegraphics[width=0.24\textwidth]{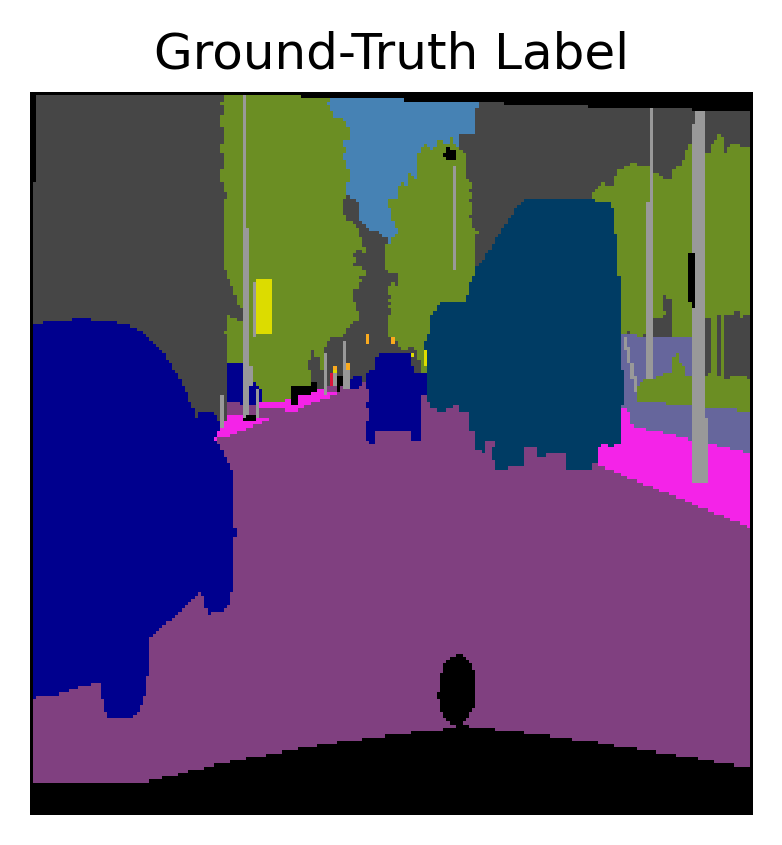}
		\includegraphics[width=0.24\textwidth]{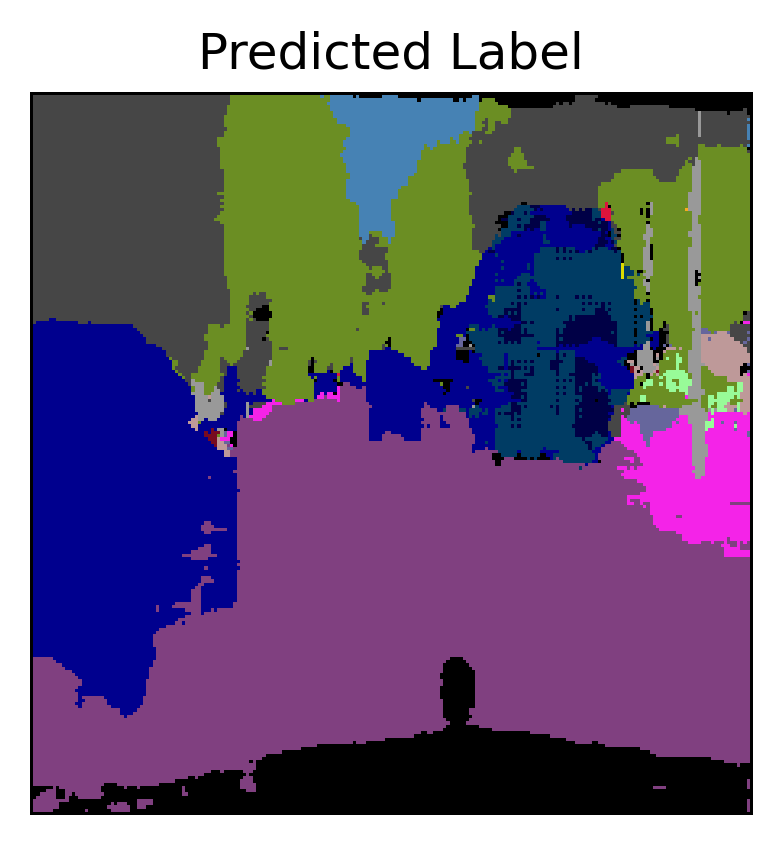}
		\includegraphics[width=0.24\textwidth]{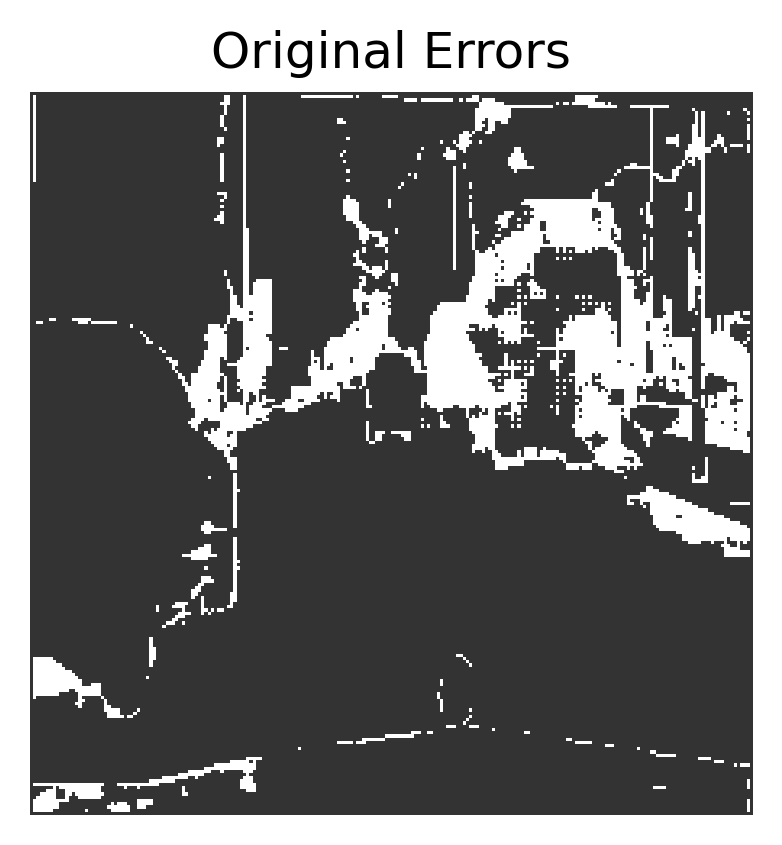} \\
		\includegraphics[width=0.24\textwidth]{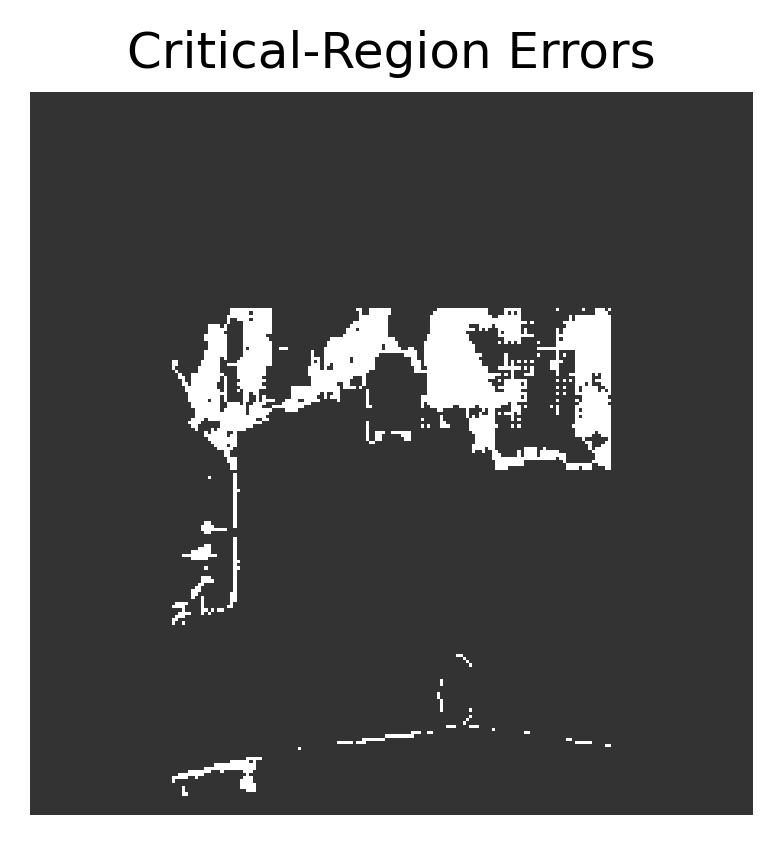}
		\includegraphics[width=0.24\textwidth]{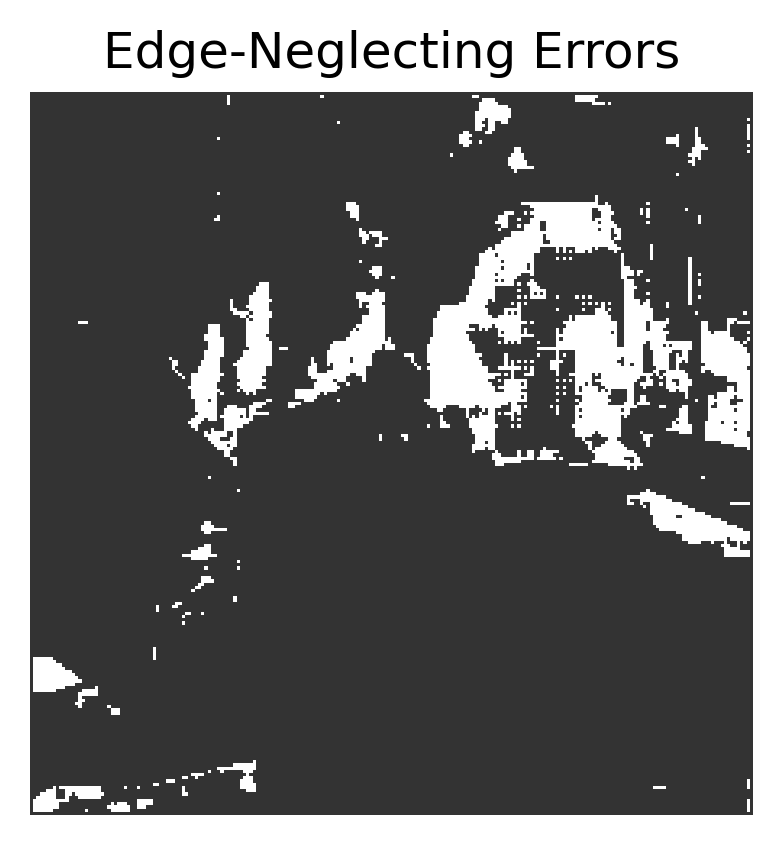}
		\includegraphics[width=0.24\textwidth]{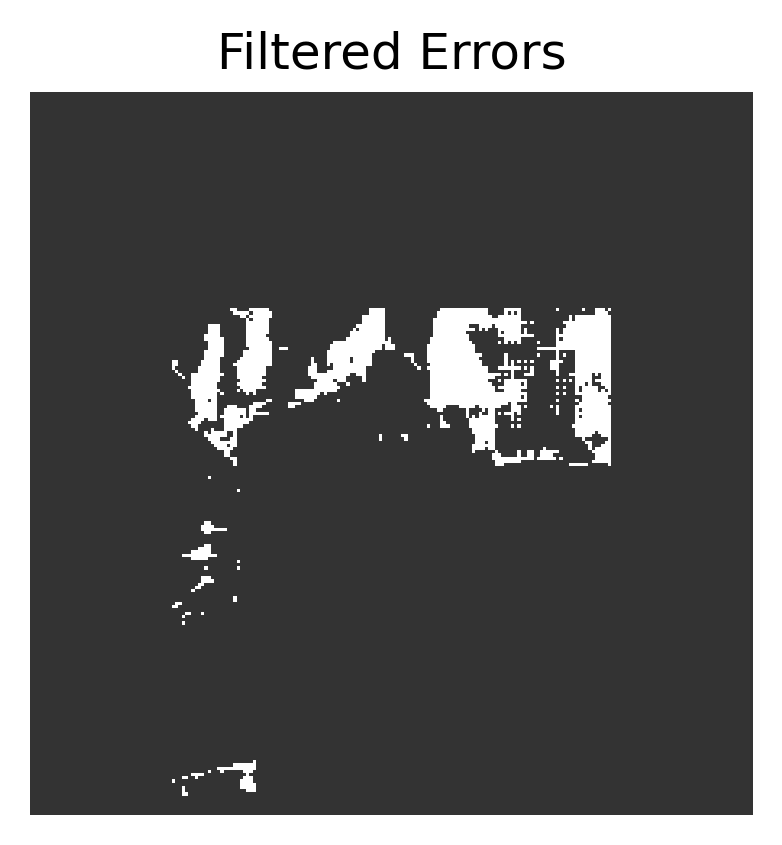}
		\includegraphics[width=0.24\textwidth]{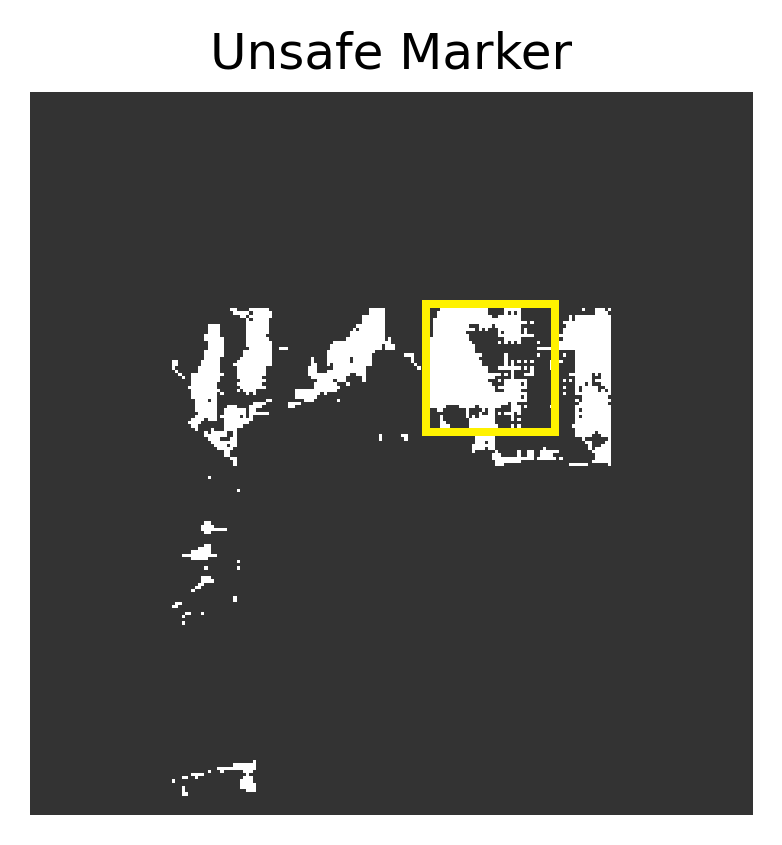}
		\caption{The first sample with a higher accuracy (\sig{pcm} $\approx 0.86$) yet unsafe condition.}
		\label{subfig.first.sample}
	\end{subfigure}
	\begin{subfigure}{\textwidth}
		\centering
		\includegraphics[width=0.24\textwidth]{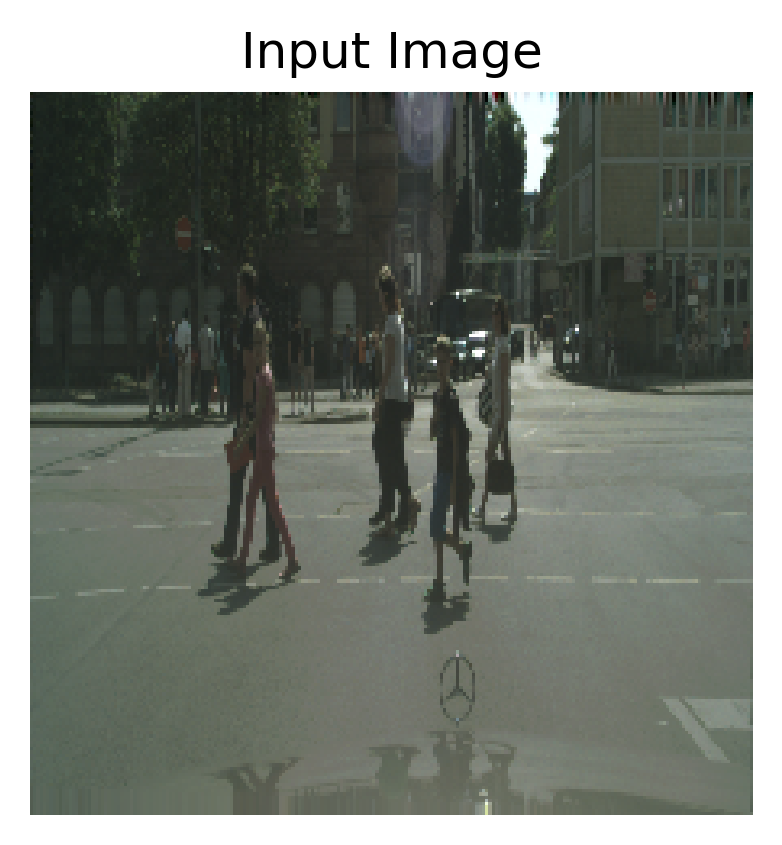}
		\includegraphics[width=0.24\textwidth]{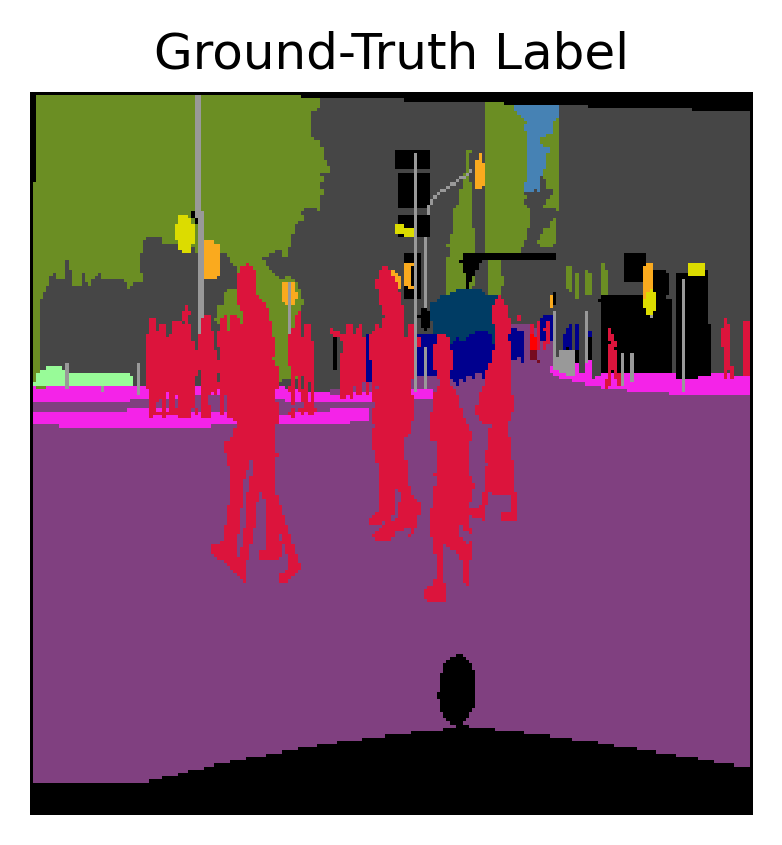}
		\includegraphics[width=0.24\textwidth]{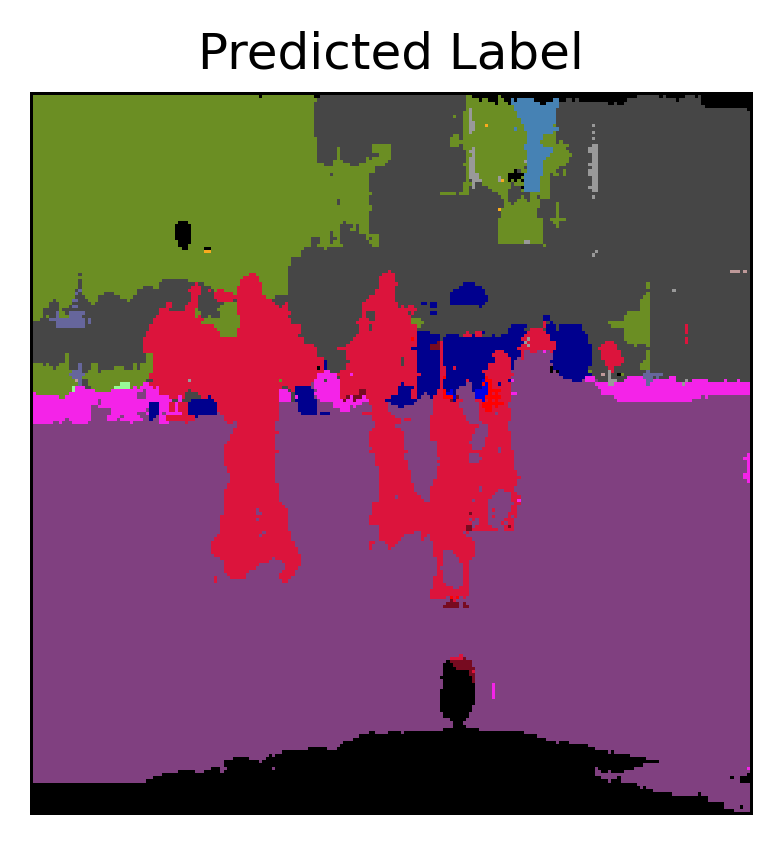}
		\includegraphics[width=0.24\textwidth]{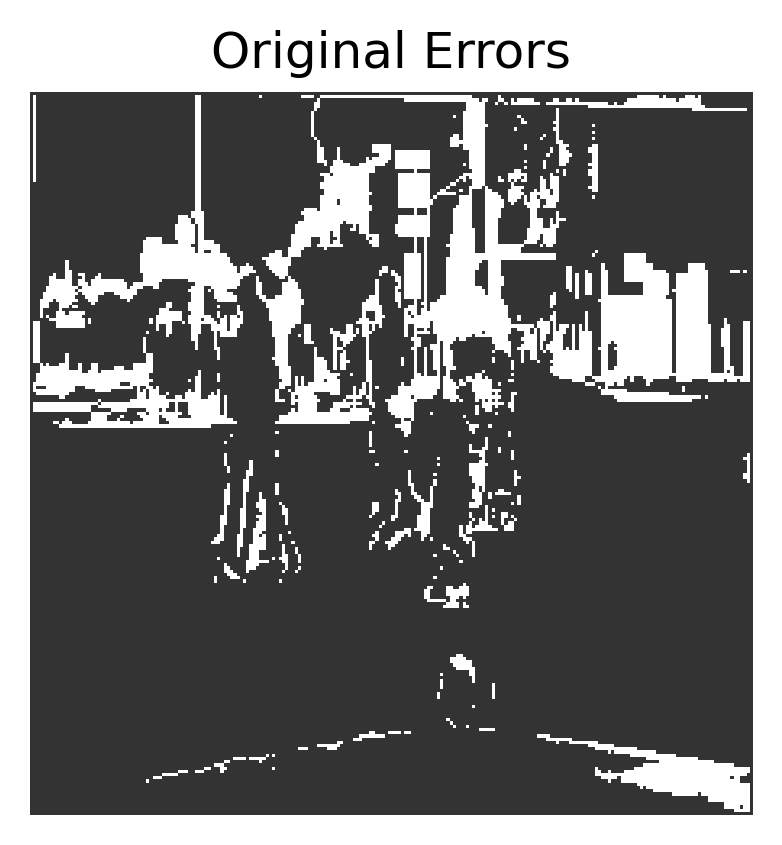} \\
		\includegraphics[width=0.24\textwidth]{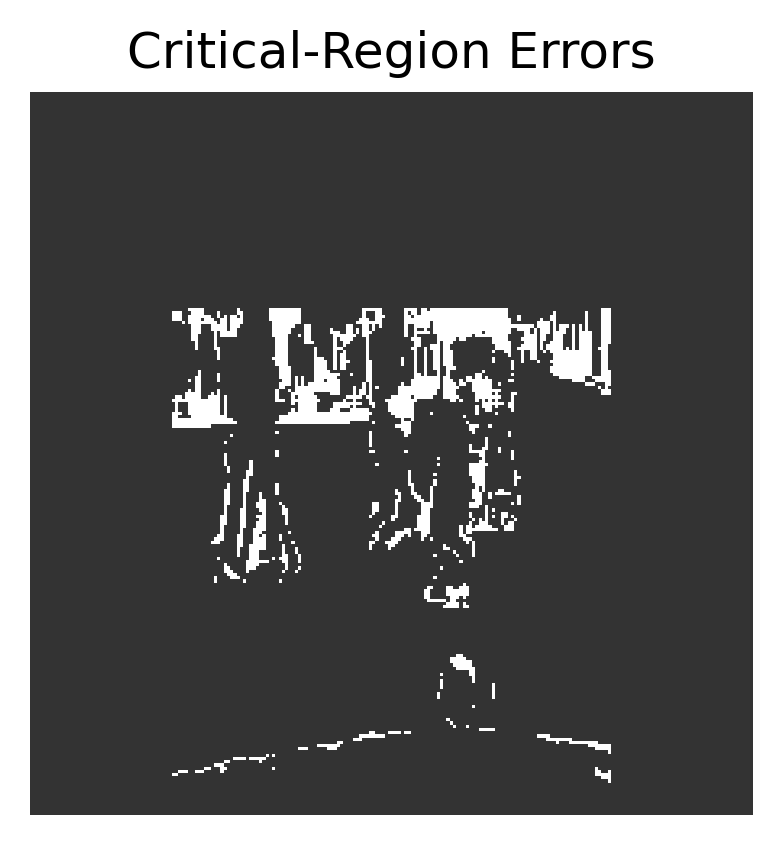}
		\includegraphics[width=0.24\textwidth]{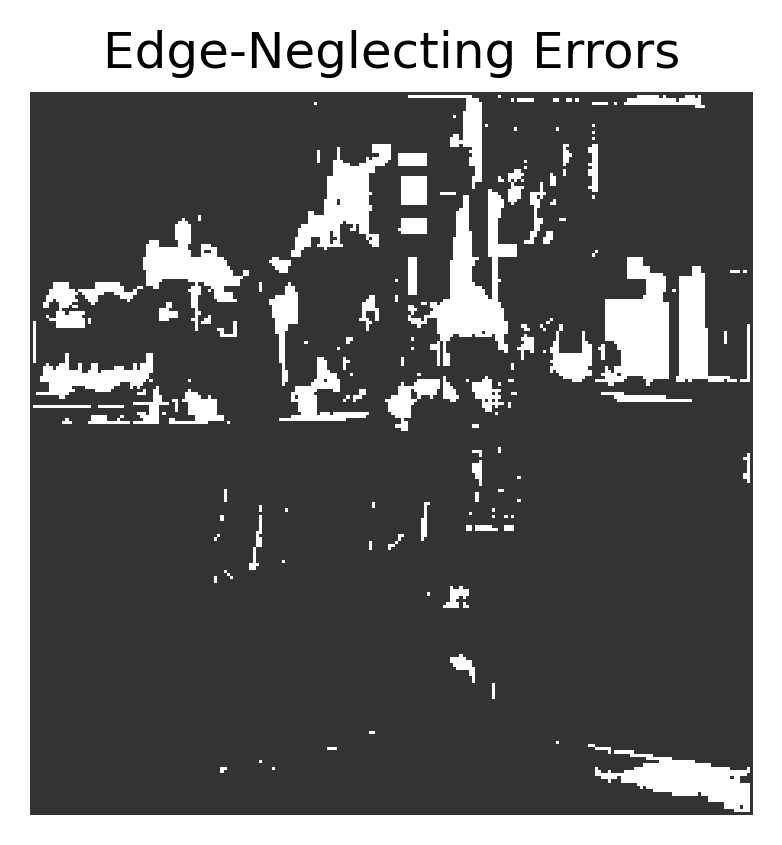}
		\includegraphics[width=0.24\textwidth]{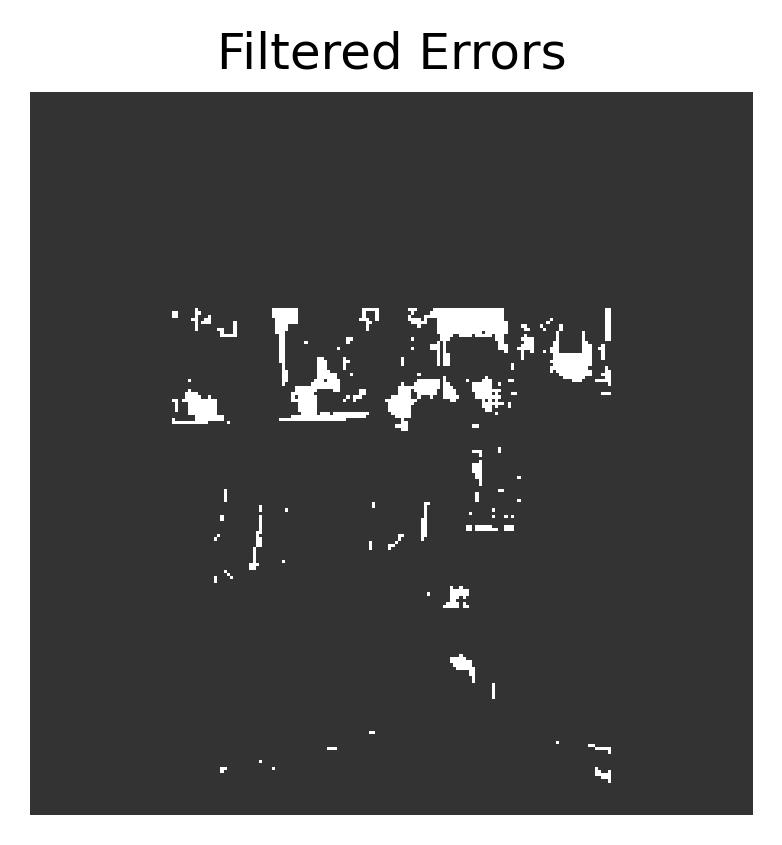}
		\includegraphics[width=0.24\textwidth]{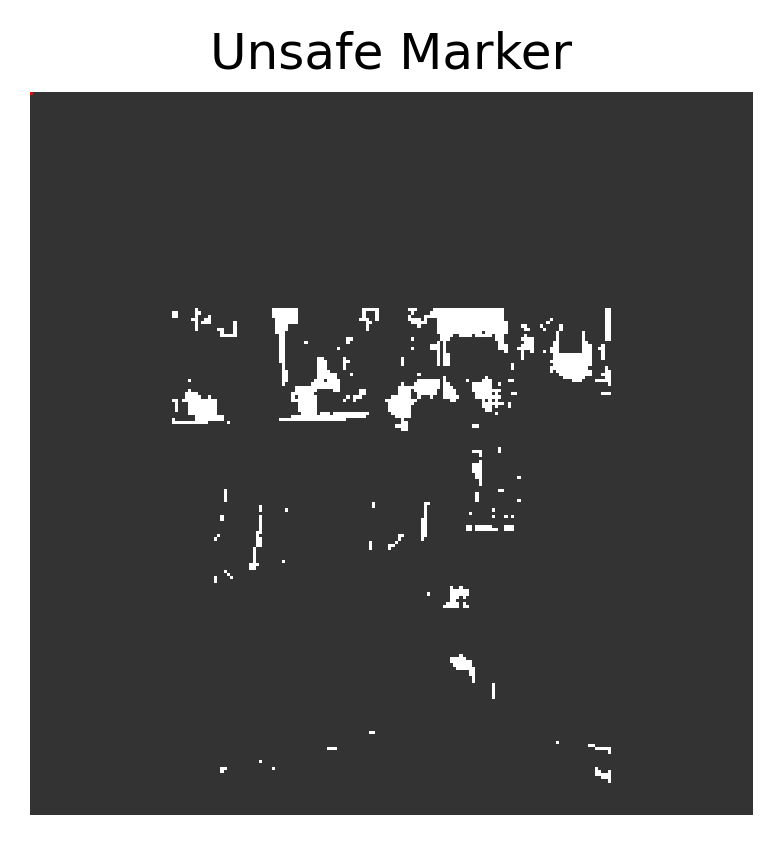}
		\caption{The second sample with a lower accuracy (\sig{pcm} $\approx 0.84$) yet safe condition.}
		\label{subfig.second.sample}
	\end{subfigure}
	\caption{Two samples from the Cityscapes dataset~\cite{Cordts2016Cityscapes} highlighting the need of safety-aware metrics, aside from ordinary accuracy metrics, in autonomous driving scenarios.}
	\label{fig:results}
\end{figure*}

We demonstrate the effect of the proposed safety-aware qualitative metric in this section. The settings of our experiment are first described in Section~\ref{subsec.settings}, trailed by the qualitative results displayed in Section~\ref{subsec.results}.

\subsection{Settings}\label{subsec.settings}

We follow closely the image segmentation tutorial in TensorFlow 2.0\footnote{https://www.tensorflow.org/tutorials/images/segmentation} for the implementation of the vision pipeline. In specific, we employ a U-Net-style  architecture~\cite{ronneberger2015unet} with ImageNet-pretrained MobileNetV2~\cite{howard2018mobilenetv2} as the encoder and the upsampling block of Pix2Pix~\cite{isola2017imagetoimage} as the decoder. We train the model with the ordinary pixel-wise cross-entropy loss, and compare predicted outputs against ground truth labels with the common accuracy metric, which is defined as \sig{pcm} in Section~\ref{subsec.pcm.prm}. Lastly, different from the TensorFlow 2.0 tutorial, we adapt the model to the Berkeley Cityscapes dataset\footnote{The evaluation using the Berkeley Cityscapes dataset in this paper is for knowledge dissemination and scientific publication and is not for commercial use.}~\cite{Cordts2016Cityscapes} as it is more suitable for autonomous driving applications. 

For the safety-aware qualitative metric, we conduct the three-step operation detailed in Section~\ref{subsec.safety.aware.qualitative.metric}. First, we set the critical region at the bottom center of an image, maintaining vertically $0.7$ and horizontally $0.6$ portion of the original image. Second, we neglect the errors occurring on edges within the ground-truth mask due to label ambiguity. Finally, the iterative error density check is triggered with $k_{safe}=20$ and $\alpha=0.5$ to inspect whether the image contains an unsafe condition. Overall, our simple yet general settings establish a solid platform to showcase how conveniently the proposed safety-aware metric can be adopted and how effectively it can work in common semantic segmentation pipelines.

\subsection{Results}\label{subsec.results}

Due to space limits, we present the results using two samples from the dataset as shown in Fig.~\ref{fig:results}. For each sample, we visualize the following eight images: 1) Input image, 2) Ground-truth label, 3) Predicted label, 4) Original errors (by comparing 2 and 3), 5) Critical-region errors (from Section~\ref{subsec.safety.aware.qualitative.metric} \textbf{Step 1}), 6) Edge-neglecting erros (from Section~\ref{subsec.safety.aware.qualitative.metric} \textbf{Step 2}), 7) Filtered errors (by combining~5 and~6) and~8) Unsafe marker (if found from Section~\ref{subsec.safety.aware.qualitative.metric} \textbf{Step~3}).

In terms of the ordinary accuracy metric, the first sample (Fig.~\ref{subfig.first.sample}) has a higher \sig{pcm} $\approx 0.86$ while the second one (Fig.~\ref{subfig.second.sample}) has a lower \sig{pcm} $\approx 0.84$. However, based on our safety inspection, the first sample contains an unsafe circumstance whereas the second one exhibits no safety concerns. Clearly, this case study indicates a reasonable need, if not a strong demand, for safety-aware metrics in safety-critical semantic segmentation use cases such as autonomous driving. Furthermore, our proposed methodology serves reliably as an initiative into this research line.

\section{Concluding Remarks}\label{sec.concluding.remarks}

In this paper, we considered safety-aware metrics for evaluating the prediction quality of semantic segmentation in autonomous driving. The work comes with a practical motivation that using pixel-level classification correctness as a proxy of safety implies the impossibility to provide evidence on safety due to the impossibility of any semantic segmentation DNN generating perfect results. We exemplified with concrete examples how some prediction errors may be neglected subject to principles agreed upfront. The qualitative evaluation demonstrated the validity of our proposal. 

For future work, we are considering integrating these metrics as monitors where the ground truth is pseudo-labeled by other sensor modalities. Another direction is to design concrete loss functions reflecting a safety mindset to fine-tune the semantic segmentation network. Yet another direction is to modify the quantitative metric as a loss function to train the DNN towards safety-by-design.

%Some issues such as very thin lines (metal wires), as well as the discontinuous effect. 

%\paragraph{(With support of additional sensor modalities)} With other sensor modalities, then one can enable more fine-grained decisions. List some examples but no need to do more. 

% One can also use it for early stopping, as it implies no need to achieve some unnecessary perfection. 

% One may even use it for runtime enforcement, to lightly correct the labels. 

% Say that monitoring and the 

\vspace{5mm}
\noindent\textbf{(Acknowledgement)} This project has received funding from the European Union’s Horizon 2020 research and innovation programme under grant agreement No 956123.

\bibliographystyle{abbrv}

% that's all folks
\end{document}